\documentclass[11pt]{article}

\newif\ifnaaclsubmission
\naaclsubmissionfalse
\ifnaaclsubmission
  \usepackage[review]{acl}
\else
  \usepackage[final]{acl}
\fi

\usepackage{times}
\usepackage{latexsym}
\usepackage[T1]{fontenc}
\usepackage[utf8]{inputenc}
\usepackage{microtype}
\usepackage{inconsolata}
\usepackage{graphicx}
\usepackage{booktabs}
\usepackage{amsmath}
\usepackage{amssymb}
\usepackage{amsthm}
\usepackage{algorithm}
\usepackage{algpseudocode}
\usepackage{url}
\usepackage{cuted}
\usepackage{capt-of}

\newtheorem{proposition}{Proposition}
\theoremstyle{remark}
\newtheorem{remark}{Remark}

\title{LOCUS: Task-Aware Low-Rank Post-Training for Token-Efficient Language Generation}

\author{Dongfang Zhao (dzhao@uw.edu)}

\begin{document}
\maketitle

\begin{abstract}
Large language model serving costs scale directly with output sequence length, yet standard preference alignment often inflates response verbosity without improving utility.
We study whether the parameterization of post-training updates affects generation length: low-rank subspaces alter sequence length without modifying the alignment loss.
We present LOCUS, a method that selects a task-aware low-rank adaptation subspace to minimize output-token cost subject to a utility constraint.
Within this subspace, post-training retains the native preference objective with a frozen backbone.
Across Anthropic HH-RLHF dialogue preferences, we evaluate two $\sim$3B decoder backbones, Pythia-2.8B and Qwen2.5-3B, against protocol-matched full-parameter DPO and DrDPO branches and the released SamPO checkpoint. LOCUS reduces continuation length by up to 39.84\% on Pythia-2.8B and by 14.87--17.58\% on Qwen2.5-3B while updating only 0.24--0.28\% of model parameters, with no material change in the internal preference diagnostic.
\end{abstract}

\section{Introduction}
Serving large language models requires substantial computational and memory resources, with per-request latency and serving expenses governed by the number of autoregressively generated output tokens \citep{pope2023efficiently,kwon2023efficient}.
In production environments, each additional decoded token requires an iterative forward pass through the entire model depth, consuming memory bandwidth and expanding key-value cache storage \citep{leviathan2023fast,dao2022flashattention}.
Consequently, generating overly verbose responses directly diminishes operational throughput and escalates deployment costs.
Developing techniques that produce concise generations while maintaining strict answer quality represents a central objective for practical language model serving.

Post-training preference alignment methods align models with human values \citep{ouyang2022training,rafailov2023direct}.
Algorithms such as Direct Preference Optimization \citep{rafailov2023direct}, Distributionally Robust DPO \citep{wu2025drdpo}, and down-sampled divergence methods \citep{lu-etal-2024-eliminating} successfully steer model outputs toward preferred behaviors.
However, when optimized across all backbone parameters, these objectives frequently suffer from verbosity bias, wherein models learn that longer responses correlate with higher preference scores \citep{singhal2023long,saito2023verbosity,park-etal-2024-disentangling}.
These observations motivate examining whether the parameterization of preference updates affects generation length under an unchanged objective.

Prior attempts to curtail output length typically introduce ad-hoc regularizers or prompt modifications, yet these interventions exhibit clear deficiencies.
Methods that apply heuristic length penalties or negative length rewards risk distorting the underlying preference formulation, leading to premature termination or degraded response quality \citep{li-etal-2026-gr3}.
Similarly, prompting models for conciseness relies on fragile instruction following that degrades under distribution shifts \citep{zhou2023instruction}.
These approaches modify the learning objective or inference interface, leaving the effect of update parameterization on output length open to investigation.

Our insight toward this problem is that the parameterization of post-training updates can affect output-length dynamics: low-rank adaptation subspaces can alter sequence length without modifying the alignment objective.
Low-rank adaptation (LoRA)~\citep{hu2022lora} confines trainable updates to factorized matrices, making the update subspace a concrete variable for studying generation length.
By freezing the pretrained backbone and confining parameter updates to structured low-rank trajectories, the model lands on a materially different quality-token operating point under the exact original preference loss.

This paper turns the above insight into a post-training method, namely Length Optimization for Concise Utility-Preserving Sequences (LOCUS).
LOCUS selects a task-aware low-rank adaptation subspace on development data to minimize output-token cost subject to a utility constraint.
Within each candidate subspace, adaptation uses the native post-training objective with a frozen backbone.
LOCUS leaves the underlying loss formulation intact: it adds no length normalization beyond what a native objective already specifies, no explicit token penalty, and no brevity prompting.
With the objective fixed, LOCUS treats the low-rank subspace configuration (spanning adapter rank, module placement, layer scope, and training checkpoint step) as a structured design variable.

Figure~\ref{fig:opener_effect} summarizes the primary empirical results of LOCUS across three preference-optimization objective families on the Anthropic Helpful and Harmless (HH) benchmark \citep{bai2022training} using Pythia-2.8B \citep{biderman2023pythia}.
Across the three HH comparisons on Pythia-2.8B, LOCUS reduces continuation tokens by 20.73\% under DPO, 25.29\% under DrDPO, and 39.84\% under SamPO with negligible accuracy deltas ($\le 0.13$ pp), updating only 0.28\% of parameters.
Furthermore, controlled DPO and DrDPO comparisons on Qwen2.5-3B \citep{yang2024qwen25} show 14.87\% and 17.58\% token reductions, respectively, under 0.24\% trainable parameters.
\begin{figure}[t]
\centering
\includegraphics[width=\columnwidth]{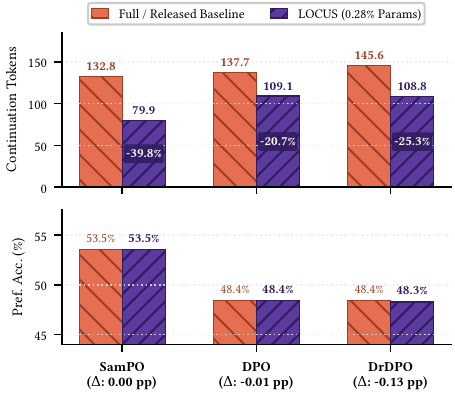}
\caption{Token and accuracy summary on Pythia-2.8B.}
\label{fig:opener_effect}
\end{figure}

In summary, this paper makes the following contributions:
\begin{itemize}
    \item We formulate token-efficient preference post-training as a task-aware low-rank subspace selection problem subject to an explicit utility constraint. The formulation preserves the native alignment objective; Section~\ref{sec:methodology} defines the problem and selection procedure.
    \item We analyze the trainable-parameter count and the exact inference-time equivalence between unmerged and merged low-rank updates. Section~\ref{sec:analysis} gives the propositions, proofs, and parameter accounting.
    \item We evaluate LOCUS on Anthropic HH-RLHF dialogue preferences with two $\sim$3B decoder backbones, Pythia-2.8B and Qwen2.5-3B, against protocol-matched full-parameter DPO and DrDPO branches and the released SamPO checkpoint. The protocol-matched branches yield 25.29\% and 17.58\% DrDPO reductions on Pythia-2.8B and Qwen2.5-3B, and continued SamPO adaptation yields 39.84\%, using a small trainable subspace; the complete protocol and results appear in Section~\ref{sec:evaluation}.
\end{itemize}

\section{Related Work}

\paragraph{Preference Optimization and Alignment.}
Preference alignment techniques train language models to align with human preferences using pairwise comparison data.
Reinforcement Learning from Human Feedback~\citep{christiano2017deep,stiennon2020learning,ouyang2022training} optimizes reward models using policy gradient techniques.
Direct Preference Optimization~\citep{rafailov2023direct} eliminates the separate reward modeling stage by deriving a closed-form substitution of the reward function into the policy objective.
Subsequent methods extend direct preference optimization through alternative objectives.
For example, IPO \citep{azar2024general} introduces regularized linear losses, whereas KTO \citep{ethayarajh2024kto} models prospect theory.
DrDPO \citep{wu2025drdpo} instead addresses reference distribution shifts.
Prior methods optimize objectives in the full parameter space or modify the loss formulation; LOCUS preserves the native preference loss while altering only the optimization subspace.

\paragraph{Length Bias and Output Conciseness.}
Language models aligned through preference optimization frequently develop severe length bias, generating unnecessarily long responses to maximize rewards \citep{singhal2023long,saito2023verbosity,dubois2024alpacafarm}.
Several studies investigate this phenomenon: LR-DPO~\citep{park-etal-2024-disentangling} disentangles length from quality through score adjustments, while SamPO~\citep{lu-etal-2024-eliminating} down-samples reference sequence length to diminish length reliance.
GR$^3$~\citep{li-etal-2026-gr3} applies group relative reward rescaling during reinforcement learning to alleviate length inflation.
Prior approaches introduce ad-hoc length regularization terms or heuristic prompting that distort the target preference distribution, whereas LOCUS achieves token efficiency through subspace parameterization without modifying the underlying objective.

\paragraph{Parameter-Efficient Fine-Tuning.}
Parameter-efficient fine-tuning (PEFT) adapts large language models by updating a small subset of parameters while freezing base weights.
Prominent methodologies include adapter insertion \citep{houlsby2019parameter}, prefix tuning \citep{li2021prefix}, prompt tuning \citep{lester2021power}, and low-rank adaptation \citep{hu2022lora}.
Recent works expand low-rank tuning through quantization \citep{dettmers2024qlora}, weight decomposition \citep{liu2024dora}, and adaptive rank allocation \citep{zhang2023adaptive}.
Theoretical investigations establish that language model adaptation possesses low intrinsic dimensionality, indicating that overparameterized updates are redundant for task learning \citep{aghajanyan2021intrinsic,sharma2023truth}.
Conventional parameter-efficient tuning treats low-rank adaptation solely as a resource-saving approximation to full fine-tuning; in contrast, LOCUS treats the adaptation subspace as a Pareto design variable to control generation length while preserving task utility.

\section{Methodology}\label{sec:methodology}

Figure~\ref{fig:architecture} summarizes the LOCUS procedure in three stages.
First, LOCUS attaches low-rank adapters to selected modules.
The pretrained backbone remains frozen, which confines parameter updates to those adapters.
The adapters are then trained with the original preference objective, allowing the experiment to test whether the update parameterization changes generation length without adding a length penalty or changing the loss.

Second, LOCUS trains a small set of candidate adapters that differ in rank, target modules, target layers, or training step.
It evaluates each candidate on development data and selects the shortest one that stays within the allowed utility difference from the baseline.
When a confirmation split is available, the selected adapter must pass that check before the held-out test evaluation.

Finally, LOCUS prepares the selected adapter for inference.
The adapter can be merged into the backbone for a single deployment, or kept separate when one backbone serves multiple task adapters.
The formal notation used in Figure~\ref{fig:architecture}, together with the native DPO and DrDPO objectives, is defined in Section~3.1.

\begin{figure}[t]
\centering
\includegraphics[width=\columnwidth]{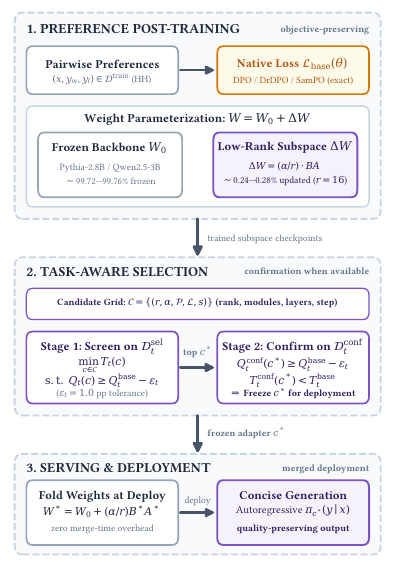}
\caption{LOCUS procedure from native-objective training to selected low-rank deployment.}
\label{fig:architecture}
\end{figure}

\begin{algorithm}[t]
\caption{LOCUS Subspace Selection}
\label{alg:locus_selection}
\footnotesize
\begin{algorithmic}[1]
\Require $\mathcal{L}_{\mathrm{base}}, W_0, \mathcal{D}_t^{\mathrm{train}}, \mathcal{D}_t^{\mathrm{sel}}, \mathcal{D}_t^{\mathrm{conf}}, \epsilon_t, \mathcal{C}$
\Ensure Selected adapter $c^*$ and policy $\pi_{c^*}$
\State $Q_t^{\mathrm{base}} \gets \mathrm{EvalUtility}(\pi_{\mathrm{base}}, \mathcal{D}_t^{\mathrm{sel}})$
\For{$c \in \mathcal{C}$}
    \State $\Delta W_c \gets \mathrm{Train}(\mathcal{L}_{\mathrm{base}}, W_0, \mathcal{D}_t^{\mathrm{train}}, c)$
    \State $(T_t(c), Q_t(c)) \gets \mathrm{Eval}(\pi_c, \mathcal{D}_t^{\mathrm{sel}})$
\EndFor
\State $\mathcal{C}_{\mathrm{feas}} \gets \{c \in \mathcal{C} \mid Q_t(c) \ge Q_t^{\mathrm{base}} - \epsilon_t\}$
\State $c^* \gets \arg\min_{c \in \mathcal{C}_{\mathrm{feas}}} T_t(c)$
\If{$\mathcal{D}_t^{\mathrm{conf}}$ exists}
\State $(T_{\mathrm{conf}}, Q_{\mathrm{conf}}) \gets \mathrm{Eval}(\pi_{c^*}, \mathcal{D}_t^{\mathrm{conf}})$
\State $Q_{\mathrm{conf}}^{\mathrm{base}} \gets \mathrm{EvalUtility}(\pi_{\mathrm{base}}, \mathcal{D}_t^{\mathrm{conf}})$
\State $T_{\mathrm{conf}}^{\mathrm{base}} \gets \mathrm{EvalLength}(\pi_{\mathrm{base}}, \mathcal{D}_t^{\mathrm{conf}})$
    \State $u \gets [Q_{\mathrm{conf}} \ge Q_{\mathrm{conf}}^{\mathrm{base}} - \epsilon_t]$
    \State $\ell \gets [T_{\mathrm{conf}} < T_{\mathrm{conf}}^{\mathrm{base}}]$
\If{$u \land \ell$}
    \State \Return Frozen adapter $c^*$ and model $\pi_{c^*}$
\Else
    \State \Return Fallback baseline $\pi_{\mathrm{base}}$
\EndIf
\Else
    \State \Return Unconfirmed adapter $c^*$
\EndIf
\end{algorithmic}
\end{algorithm}

\subsection{Objective-Preserving Adaptation}
We denote an input prompt by $x \in \mathcal{X}$ and an autoregressively generated continuation by $y = (y_1, y_2, \dots, y_T) \in \mathcal{Y}$.
We consider an autoregressive language model parameterized by frozen pretrained backbone weights $W_0 \in \mathbb{R}^{d_1 \times d_2}$.
Given a task dataset $\mathcal{D}$ with pairwise preference comparisons $(x, y_w, y_l)$, where $y_w$ denotes the preferred response and $y_l$ denotes the dispreferred response, standard alignment methods optimize a base objective $\mathcal{L}_{\mathrm{base}}(\theta)$ across all model parameters.
For example, under Direct Preference Optimization \citep{rafailov2023direct}, defining the implicit reward proxy $r_\theta(x, y) = \beta \log [\pi_\theta(y \mid x) / \pi_{\mathrm{ref}}(y \mid x)]$, the objective is:
\begin{equation}
\mathcal{L}_{\mathrm{DPO}}(\theta) = -\mathbb{E}_{\mathcal{D}} \left[ \log \sigma \left( r_\theta(x, y_w) - r_\theta(x, y_l) \right) \right],
\label{eq:dpo_loss}
\end{equation}
where $\beta > 0$ controls the divergence penalty from the reference policy $\pi_{\mathrm{ref}}$. The function $\sigma$ denotes the sigmoid.
Under Distributionally Robust DPO \citep{wu2025drdpo}, the objective robustifies against sample-level reference divergence:
\begin{equation}
\mathcal{L}_{\mathrm{DrDPO}}(\theta) = -\beta' \log \left( \frac{1}{B} \sum_{i=1}^B e^{-\ell_i(\theta)/\beta'} \right),
\label{eq:drdpo_loss}
\end{equation}
where $\ell_i(\theta)$ denotes the individual pairwise loss for sample $i$ in a microbatch of size $B$. The parameter $\beta'$ controls distributional robustness.

In LOCUS, we leave the native objective $\mathcal{L}_{\mathrm{base}}$ strictly intact.
We restrict trainable updates through the LoRA parameterization in Eq.~\eqref{eq:lora_param}:
\begin{equation}
W = W_0 + \Delta W = W_0 + \frac{\alpha}{r} B A,
\label{eq:lora_param}
\end{equation}
where $A \in \mathbb{R}^{r \times d_2} \sim \mathcal{N}(0, \sigma^2)$ and $B \in \mathbb{R}^{d_1 \times r}$ is zero-initialized. The adaptation uses rank $r \ll \min(d_1, d_2)$ with scaling factor $\alpha$.
We define an adaptation subspace configuration as a tuple $c = (r, \alpha, \mathcal{P}, \mathcal{L}, s) \in \mathcal{C}$. The sets $\mathcal{P} \subseteq \{\text{Q}, \text{K}, \text{V}, \text{O}, \text{MLP}\}$ and $\mathcal{L} \subseteq \{1, \dots, L\}$ specify the targeted weight modules and layer indices, respectively. For these targets, $s$ identifies the training optimization checkpoint step.
The resulting selection procedure is summarized in Algorithm~\ref{alg:locus_selection}. It first measures baseline utility on the task's selection split. Each candidate adapter is then trained with the same native objective and evaluated on that split for continuation length and preference utility. LOCUS discards candidates whose utility falls beyond the allowed tolerance, then selects the shortest remaining candidate. When a confirmation split is available, the selected candidate must also reduce length without exceeding the utility tolerance; otherwise, LOCUS falls back to the baseline. If no confirmation split is available, the selected adapter is returned as unconfirmed.

\subsection{Task-Aware Constrained Selection}
Different subspace configurations $c \in \mathcal{C}$ produce distinct behavioral trajectories during autoregressive decoding.
Define $T_t(c)$ as the mean continuation token length produced by configuration $c$ under greedy decoding on task $t$. The corresponding task utility is denoted by $Q_t(c)$.
For candidate selection, $Q_t(c)$ is evaluated using held-out chosen-versus-rejected sequence log-probability preference accuracy over $(x, y_w, y_l) \in \mathcal{D}_t$:
\begin{equation}
Q_t(c) = \frac{1}{|\mathcal{D}_t|} \sum_{(x, y_w, y_l)} \mathbb{I}(\pi_c(y_w \mid x) > \pi_c(y_l \mid x)).
\label{eq:utility_metric}
\end{equation}

LOCUS identifies the optimal adaptation configuration $c_t^*$ by solving a constrained minimization problem:
\begin{equation}
\begin{aligned}
c_t^* = \arg\min_{c \in \mathcal{C}} \; & T_t(c) \\
\text{subject to} \; & Q_t(c) \ge Q_t^{\mathrm{base}} - \epsilon_t,
\end{aligned}
\label{eq:locus_optimization}
\end{equation}
where $Q_t^{\mathrm{base}}$ is the benchmark utility achieved by the protocol-appropriate baseline on the same task. The tolerance $\epsilon_t \ge 0$ specifies the allowed utility decrease from that baseline (set to $1.0$ percentage points in this work).

To prevent split overfitting, LOCUS partitions development data into disjoint selection ($\mathcal{D}_t^{\mathrm{sel}}$) and confirmation ($\mathcal{D}_t^{\mathrm{conf}}$) splits. Candidate configurations in $\mathcal{C}$ are screened on $\mathcal{D}_t^{\mathrm{sel}}$ via Eq.~\eqref{eq:locus_optimization}; when available, the top candidate $c^*$ is verified on $\mathcal{D}_t^{\mathrm{conf}}$ before being frozen for held-out evaluation.

\subsection{Complexity and Serving Modalities}
The trainable parameter and optimizer-state footprint of LOCUS scales with the low-rank factors. For a targeted module, low-rank adaptation exposes $r(d_1+d_2)$ trainable entries; full-parameter adaptation exposes $d_1d_2$.
Consequently, gradient storage and optimizer states shrink with the trainable set. Under an Adam-style two-state optimizer, the primary adapter requires two state tensors over 7.86M entries for Pythia-2.8B or 7.37M for Qwen2.5-3B.
This accounting assumes two optimizer states per trainable entry; actual state storage depends on each experiment's optimizer. Because the frozen backbone also requires forward computation and activation backpropagation, total memory and training FLOPs require separate assessment.

During serving, LOCUS supports two deployment modalities. In single-tenant deployments, low-rank weights are pre-merged via $W^* = W_0 + \frac{\alpha}{r} B^* A^*$ prior to execution, incurring zero adapter-induced latency or parameter overhead. In multi-tenant deployments, a single frozen backbone dynamically serves multiple task adapters, though active adapters retain bookkeeping and memory overhead not benchmarked here.

\section{Analysis}\label{sec:analysis}

\subsection{Trainable Parameterization}\label{sec:analysis_parameterization}
Because LOCUS restricts every update to a low-rank subspace, the size of that restriction deserves a precise statement.
Full-parameter fine-tuning gives the optimizer one free parameter per entry of a weight matrix, whereas the low-rank parameterization gives it only the entries of the two factors: on a $2560 \times 2560$ module at $r=16$, 81,920 trainable entries rather than 6,553,600.
The proposition below states the count for a general module, with the remark that follows bounding what the count can be taken to mean.

\begin{proposition}[Trainable parameter-count reduction]\label{prop:adapter_count}
Let $W \in \mathbb{R}^{d_1 \times d_2}$ be a targeted linear module and let $r$ be a positive integer.
Full-parameter fine-tuning exposes $d_1d_2$ trainable scalar entries, whereas the LoRA parameterization
$W = W_0 + (\alpha/r)BA$, with fixed $W_0$, fixed $\alpha \ne 0$, $B \in \mathbb{R}^{d_1 \times r}$, and $A \in \mathbb{R}^{r \times d_2}$, exposes $r(d_1+d_2)$ trainable factor entries.
Thus the factor parameterization has fewer exposed trainable entries exactly when $r(d_1+d_2) < d_1d_2$.
\end{proposition}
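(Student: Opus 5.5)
The argument is a direct count of free scalar coordinates under each parameterization. The only thing to pin down is what ``exposed trainable entries'' means, so I will fix that convention explicitly before counting.

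\textbf{Full-parameter case.} Every entry $W_{ij}$, with $1\le i\le d_1$ and $1\le j\le d_2$, is an independent scalar the optimizer may update. The trainable coordinate vector is therefore $\mathrm{vec}(W)\in\mathbb{R}^{d_1d_2}$, giving $d_1d_2$ entries.

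\textbf{LoRA case.} The quantities $W_0$ and $\alpha$ are held fixed, so the only optimizer variables are the entries of $B\in\mathbb{R}^{d_1\times r}$ and $A\in\mathbb{R}^{r\times d_2}$. These contribute $d_1r$ and $rd_2$ entries respectively, for a total of $r(d_1+d_2)$. The map $(A,B)\mapsto W_0+(\alpha/r)BA$ is the parameterization, and the count refers to the domain of this map, not the dimension of its image. I will state this explicitly. It explains why the hypothesis $\alpha\neq 0$ appears: it guarantees the factors actually influence $W$, so they are genuine trainable parameters rather than dead coordinates.

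\textbf{Final claim.} The claim is then immediate. The factor parameterization exposes fewer entries exactly when $r(d_1+d_2)<d_1d_2$, since both sides are the respective counts and the comparison of two integers is the biconditional being asserted. For orientation I will also note an equivalent form. Dividing by $d_1d_2>0$ gives $r\bigl(1/d_1+1/d_2\bigr)<1$. Hence for square modules the condition is $r<d/2$, and the running example $d_1=d_2=2560$, $r=16$ gives $81{,}920<6{,}553{,}600$.

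The main point needing care is not computational but definitional. One must avoid conflating the parameter count with the degrees of freedom of the realized update. The set of matrices $BA$ of rank at most $r$ has dimension $r(d_1+d_2)-r^2$ because of the $\mathrm{GL}_r$ gauge symmetry $(B,A)\mapsto(BG,G^{-1}A)$. I will therefore add a one-line remark that the proposition counts stored and optimized factor entries, which is what governs gradient and optimizer-state memory. It makes no claim about the intrinsic dimension of the reachable update set.
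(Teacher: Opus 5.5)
Your proposal is correct and follows essentially the same route as the paper: you count the $d_1d_2$ basis coefficients against the $d_1r+rd_2$ factor entries, and you add the same caveat that the $\mathrm{GL}_r$ redundancy $(B,A)\mapsto(BG,G^{-1}A)$ makes the realized update set only $r(d_1+d_2-r)$-dimensional, while the optimizer still stores all $r(d_1+d_2)$ entries. One small point to tighten is that this dimension formula for the rank-$\le r$ set assumes $r\le\min(d_1,d_2)$; this is harmless, since the remark is not needed for the proposition itself.
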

\begin{proof}
The complete proof, including the distinction between exposed factor entries and the intrinsic dimension of the represented update set, is given in Appendix~\ref{sec:appendix_proof}.
\end{proof}
\begin{remark}[Interpretation of the count]
The proposition is an implementation-level parameter-count statement.
It does not claim that the low-rank factors select particular semantic or verbosity-specific directions, nor that the count alone determines generation length, preference accuracy, total memory, or training speed.
\end{remark}

For the selected Pythia-2.8B attention configuration ($L=32, d=2560, r=16$), the combined adapter accounts for $N_{\mathrm{attn}} = Lr((3d+d)+(d+d)) = 7{,}864{,}320$ trainable parameters ($0.2826\%$ of the model). Similarly, for Qwen2.5-3B ($L=36, r=16$ on \texttt{q\_proj}, \texttt{k\_proj}, \texttt{v\_proj}, \texttt{o\_proj}), the adapter comprises $7{,}372{,}800$ trainable parameters ($0.2383\%$ of 3.09B). These calculations explain the parameter counts in Table~\ref{tab:backbones}. Translating those counts into end-to-end memory or latency requires measurements of the execution path.

\subsection{Inference-Time Merge Equivalence}\label{sec:analysis_merge}
A second property is required for the deployment discussion: after training, LOCUS may either keep an adapter branch or merge its update into the corresponding frozen linear module.
The following proposition identifies the exact condition under which these two inference representations compute the same function.

\begin{proposition}[Exact equivalence after adapter merging]\label{prop:merge_equivalence}
Consider a linear module with input $h$, frozen weight $W_0$, and a LoRA update $(\alpha/r)BA$.
At inference time, with adapter dropout disabled, the unmerged computation and the computation using the merged weight
$W^* = W_0 + (\alpha/r)BA$ produce exactly the same module output for every input $h$.
\end{proposition}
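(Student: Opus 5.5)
The plan is to write both inference-time computations out explicitly and show they agree by the distributive and associative laws of matrix multiplication. Everything is exact real arithmetic, as the statement intends; finite-precision effects are left to a remark.

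First, fix the convention for the module. It acts on an input $h \in \mathbb{R}^{d_2}$ (or a batch $H \in \mathbb{R}^{d_2 \times n}$) and returns $W_0 h$, with any bias $b$ added identically in both representations.

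Second, define the unmerged computation. It is the frozen branch plus the adapter branch, $y_{\mathrm{un}}(h) = W_0 h + \frac{\alpha}{r} B\bigl(A\, \mathrm{drop}(h)\bigr)$. With dropout disabled at inference, $\mathrm{drop}$ is the identity map. This hypothesis matters: in training mode a random mask is applied to $h$ only in the adapter branch, so the two forms would differ.

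Third, define the merged computation, $y_{\mathrm{m}}(h) = W^* h = \bigl(W_0 + \frac{\alpha}{r} BA\bigr) h$.

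The key step is a single line:
\[
\Bigl(W_0 + \tfrac{\alpha}{r}BA\Bigr)h = W_0 h + \tfrac{\alpha}{r}(BA)h = W_0 h + \tfrac{\alpha}{r}B(Ah).
\]
The first equality is distributivity of matrix–vector multiplication over matrix addition, together with compatibility with scalar multiplication. The second is associativity, $(BA)h = B(Ah)$. The dimensions are consistent: $BA \in \mathbb{R}^{d_1 \times d_2}$ matches $W_0$. The identity holds for every $h$, so the module outputs coincide as functions. The same computation applies columnwise to a batch $H$, and any bias term cancels on both sides.

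Finally, lift the equality from one module to the whole model. The rest of the network is unchanged, and each adapted module now computes the same function in both representations. By induction over the layers in forward order, every downstream activation is identical. Hence the logits, the greedy decodes, and the continuation lengths are identical too.

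The only real obstacle is being precise about what "exactly" covers. The algebraic identity is exact over $\mathbb{R}$. In floating point, however, $W_0 + \frac{\alpha}{r}BA$ and $W_0 h + \frac{\alpha}{r}B(Ah)$ round differently. I would therefore state the proposition for exact arithmetic and add a remark that numerical discrepancies are bounded by rounding error. Such discrepancies can flip greedy argmax ties only in degenerate cases.
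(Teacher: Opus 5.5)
Your proof is correct and follows the paper's argument essentially step for step: write the unmerged path as $W_0h + \frac{\alpha}{r}B(Ah) + b$, collapse it to $W^*h + b$ by distributivity and associativity, invoke disabled dropout to make the adapter branch deterministic, and extend to downstream activations and logits by composition. One quibble concerns your closing remark, which lies outside the proposition: in bf16 a rounding-level logit difference can flip a near-tied greedy argmax, and a single flip changes the entire remaining continuation, so ``only in degenerate cases'' likely understates how often merged and unmerged decodes diverge.
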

\begin{proof}
The complete algebraic proof and its assumptions are given in Appendix~\ref{sec:appendix_proof}.
\end{proof}
\begin{remark}[Scope of merge equivalence]
The equality concerns the output of a linear module in inference mode. Deployments that produce this same output can have different latency and memory costs because an unmerged adapter adds computation and bookkeeping. Furthermore, finite-precision or quantized implementations may introduce small numerical differences.
\end{remark}

\section{Evaluation}\label{sec:evaluation}

\subsection{Experimental Setup}\label{sec:setup}

\paragraph{Backbones.}
We evaluate LOCUS on two decoder-only backbone families at roughly 3B parameter scale: Pythia-2.8B~\citep{biderman2023pythia}, based on GPT-NeoX~\citep{black2022gpt}, and Qwen2.5-3B~\citep{yang2024qwen25}, a modern Llama-style architecture with grouped-query attention, SwiGLU~\citep{shazeer2020glu}, and RMSNorm~\citep{zhang2019rmsnorm}. Appendix Table~\ref{tab:backbones} details the architectural differences and target modules for each backbone.

\paragraph{Baselines.}
We compare LOCUS with a protocol-matched full-parameter branch while fixing the training pool, starting checkpoint, native objective, and data splits. Controlled DPO~\citep{rafailov2023direct} and DrDPO~\citep{wu2025drdpo} first train a shared full-parameter supervised fine-tuned (SFT) checkpoint, then compare full-parameter preference optimization with low-rank LOCUS from that checkpoint. SamPO~\citep{lu-etal-2024-eliminating} instead continues from the official released Pythia-2.8B HH-RLHF Iterative SamPO checkpoint, while Qwen2.5-3B repeats the shared-SFT comparison under both native objectives. Appendix Table~\ref{tab:baselines} makes every starting point and branch parameterization explicit.

\paragraph{Datasets.}
Primary evaluation uses Anthropic Helpful and Harmless (HH-RLHF)~\citep{bai2022training}, an open-ended multi-turn dialogue preference benchmark. DPO and DrDPO use 8,552 frozen test pairs, while SamPO uses its established 256-example split. The additional datasets are Anthropic HH-RLHF \texttt{harmless-base} for safety refusal and Orca DPO~\citep{mukherjee2023orca} for single-turn instruction following. Each uses a separate 256-example development evaluation; neither is presented as an external test result. Appendix Table~\ref{tab:datasets} records these roles.

\paragraph{Platform.}
All training and inference runs use one NVIDIA A100-PCIE-80GB GPU with bfloat16 mixed precision. Activation checkpointing is enabled for the DPO and DrDPO full-parameter runs with 512-token sequences, whereas the SamPO comparison follows the published recipe. The approximately 3B scale makes the full-vs-low-rank protocol reproducible on this hardware.

\paragraph{Metrics.}
We measure continuation length and held-out preference accuracy. Generation uses greedy decoding ($\text{do\_sample}=\text{False}$, $\text{max\_new\_tokens}=256$). We count continuation tokens exclusive of the end-of-sequence (EOS) token and record 256-token limit hits. Preference accuracy compares chosen and rejected sequence log-probabilities under the evaluated policy (Eq.~\eqref{eq:utility_metric}). We choose this metric because fixed preference labels and an explicit scoring rule enable reproducible model comparisons.

\subsection{Baseline Comparisons}
\begin{figure}[t]
\centering
\includegraphics[width=\columnwidth]{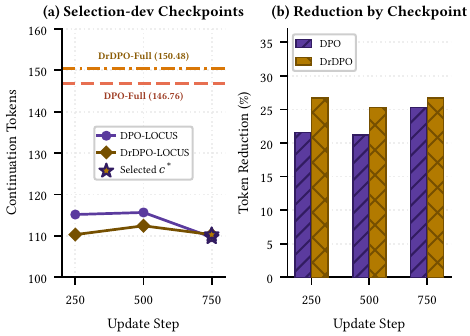}
\caption{Constrained checkpoint selection.}
\label{fig:dynamics}
\end{figure}

Figure~\ref{fig:dynamics} reports the measured 250-, 500-, and 750-step selection-dev checkpoints for DPO and DrDPO, with full-parameter baselines at 146.76 and 150.48 tokens. For DPO on Pythia-2.8B, both branches train on 160,544 Anthropic HH pairs with $\beta=0.1$ from the same SFT checkpoint; DPO-FULL updates all weights, whereas DPO-LOCUS uses a low-rank adapter. On the 8,552 frozen test pairs, LOCUS reduces mean continuation length from 137.67 to 109.12 tokens (20.73\%) and changes preference accuracy from 48.40\% to 48.39\% ($-0.01$ pp), using 7,864,320 trainable parameters (0.2826\%). Despite nonmonotonic trajectories, utility-constrained selection chooses step 750 in both comparisons.

\begin{figure}[t]
\centering
\includegraphics[width=\columnwidth]{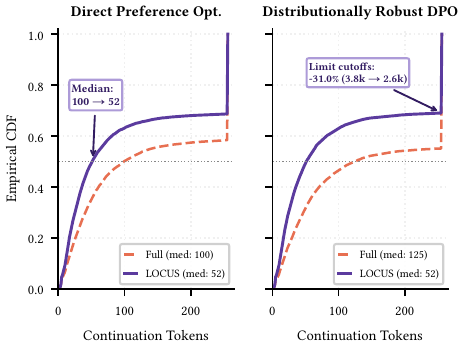}
\caption{Continuation length CDF.}
\label{fig:length_cdf}
\end{figure}

Figure~\ref{fig:length_cdf} presents the continuation-length CDF across the 8,552 frozen test prompts. Under Distributionally Robust DPO (DrDPO) on Pythia-2.8B \citep{wu2025drdpo}, DRDPO-FULL updates all weights while DRDPO-LOCUS updates only its adapter from the same SFT checkpoint, using $\beta=0.1$ and $\beta'=1.0$. LOCUS reduces mean length from 145.61 to 108.79 tokens (25.29\%) and changes preference accuracy from 48.39\% to 48.26\% ($-0.13$ pp); the selected step-750 candidate reaches 26.70\% reduction on selection-dev. The median falls from 100 to 52 tokens under DPO and from 125 to 52 under DrDPO, while limit hits fall by 24.70\% and 30.98\%.

For SamPO, LOCUS starts from the official Pythia-2.8B Anthropic HH Iterative SamPO checkpoint \citep{lu-etal-2024-eliminating}, which incorporates per-token length normalization into preference optimization. On the 256-example HH evaluation split, the released checkpoint produces a mean continuation of 132.77 tokens with 53.52\% preference accuracy. We train a low-rank adapter with $r=16$ and $\alpha=32$ on attention modules under the native SamPO objective ($\beta=0.05$). As summarized in Figure~\ref{fig:opener_effect} and Appendix Table~\ref{tab:main_results}, LOCUS reduces mean continuation length from 132.77 to 79.88 tokens (52.89 tokens; 39.84\%) while preserving 53.52\% preference accuracy ($\Delta=0.00$ pp).

\subsection{Rank Sensitivity}
Using the Pythia-2.8B backbone, we measure sensitivity to adapter rank $r \in \{4, 8, 16, 32\}$ on a 64-example Anthropic HH screening subset under attention adaptation with the SamPO objective ($\alpha / r = 2$). Rank controls the dimensionality of the trainable update, so we keep the target modules, objective, data split, and scaling rule fixed while varying only $r$. This screen uses a separate subset from the candidate pool $\mathcal{C}$ evaluated in Section~\ref{sec:evaluation}, so it characterizes rank sensitivity without redetermining that selection. Figure~\ref{fig:rank_sensitivity} reports the resulting token changes and internal preference diagnostics.

\begin{figure}[t]
\centering
\includegraphics[width=\columnwidth]{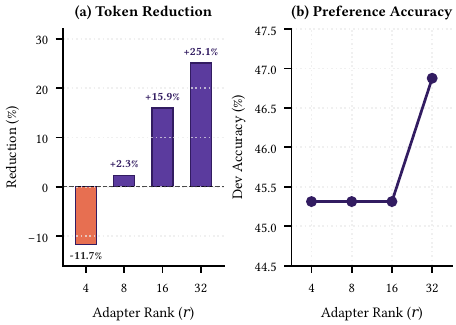}
\caption{Sensitivity to adapter rank.}
\label{fig:rank_sensitivity}
\end{figure}

Because the objective, attention placement, data split, and step budget are fixed, this screen isolates the effect of rank within the candidate adapter subspace.
At rank $r=4$, the model exhibits length inflation (+11.74\% tokens), whereas $r=8$ produces a small reduction of 2.32\%.
The reduction increases to 15.95\% at $r=16$ and 25.10\% at $r=32$ (Figure~\ref{fig:rank_sensitivity}(a)), so the measured length effect varies materially across the tested ranks.
The internal preference diagnostic remains 45.31\% for $r \in \{4,8,16\}$ and rises to 46.88\% at $r=32$ (Figure~\ref{fig:rank_sensitivity}(b)); within this screen, token reduction therefore does not follow a monotonic change in that diagnostic.

\subsection{Target-Module Ablation}
Using Pythia-2.8B, we perform a controlled ablation over which existing linear projections receive LoRA updates on a 64-example Anthropic HH screening subset with $r=16$, keeping the backbone and layer coverage fixed.
The five candidate target sets are query-key-value projections only (QKV), attention output projections only (O), their combination (Attention-All), feed-forward network projections only (MLP), and all linear projections (All-Linear).

\begin{figure}[t]
\centering
\includegraphics[width=\columnwidth]{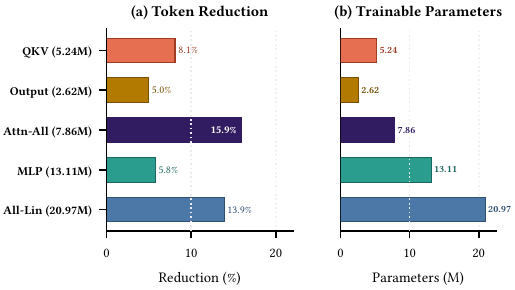}
\caption{Target-module ablation.}
\label{fig:placement_ablation}
\end{figure}

Figure~\ref{fig:placement_ablation} compares token reduction with trainable parameter count across the five target sets. QKV and output-only adaptation yield 8.09\% and 4.96\% reductions, while Attention-All reaches 15.95\%. MLP adaptation yields 5.75\%; All-Linear reaches 13.88\% with 2.7$\times$ more trainable parameters than Attention-All (20.97M vs.\ 7.86M). The internal preference diagnostic remains 45.31\% across all placements.

\subsection{Cross-Task Generality}
We test whether the measured token reduction extends beyond the HH dialogue setting when the preference data and task semantics change.
Using Pythia-2.8B, we evaluate the HH comparison alongside Anthropic HH-RLHF \texttt{harmless-base} and the Orca DPO instruction-following dataset (Figure~\ref{fig:crosstask}). These evaluations cover dialogue, safety, and instruction-following settings, each with a task-specific adapter; the HH row reports the 8,552-pair frozen-test result, Harmless and Orca DPO each use a 256-example development split, and the preference numbers remain internal chosen-vs-rejected diagnostics rather than external quality evaluations.

\begin{figure}[t]
\centering
\includegraphics[width=\columnwidth]{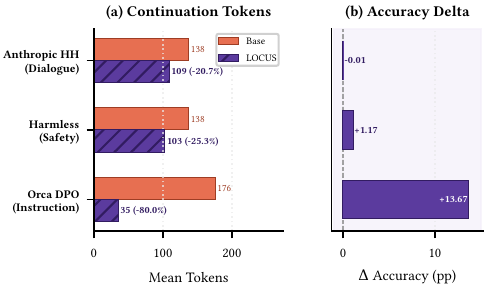}
\caption{Cross-task evaluations.}
\label{fig:crosstask}
\end{figure}

Figure~\ref{fig:crosstask}(a) reports task-specific length changes. The HH comparison reduces mean continuation length from 137.67 to 109.12 tokens (20.73\%). The Harmless evaluation reduces it from 137.53 to 102.75 tokens (25.29\%), while the Orca DPO evaluation reduces it from 175.95 to 35.25 tokens (79.97\%).
Figure~\ref{fig:crosstask}(b) reports the corresponding internal preference diagnostics: HH changes from 48.40\% to 48.39\% ($-0.01$ pp), Harmless changes from 57.03\% to 58.20\% ($+1.17$ pp), and Orca DPO changes from 69.14\% to 82.81\% ($+13.67$ pp).

\subsection{Cross-Backbone Transfer to Qwen2.5-3B}\label{sec:qwen_validation}
We test whether the token-reduction effect transfers across architectural families on Qwen2.5-3B \citep{yang2024qwen25}, a distinct decoder backbone family at comparable $\sim$3B scale.
Following the identical experimental discipline, both branches begin from a single shared full-parameter SFT checkpoint trained for one epoch on the 160,288 Anthropic HH training pairs.
Candidate low-rank adapters ($r=16, \alpha=32$ on the attention projections of all 36 layers; 7.37M parameters, 0.2383\%) are screened on the 256-pair selection split and confirmed on a disjoint 256-pair split before frozen held-out evaluation.

\begin{figure}[t]
\centering
\includegraphics[width=\columnwidth]{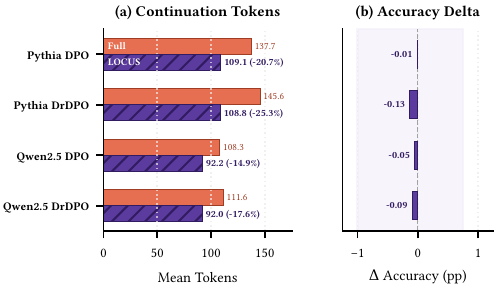}
\caption{Cross-backbone token and accuracy results.}
\label{fig:cross_backbone}
\end{figure}

Figure~\ref{fig:cross_backbone} reports controlled frozen test results on the 8,552 Anthropic HH test pairs for both DPO and DrDPO alongside the established Pythia-2.8B baselines.
Under DPO ($\beta=0.1$), LOCUS (step 250) reduces mean continuation length from 108.26 to 92.16 tokens (14.87\% reduction; median 75 to 51 tokens), while held-out preference accuracy changes from 48.69\% to 48.64\% ($-0.05$ pp delta).
Under DrDPO ($\beta=0.1, \beta'=1.0$), LOCUS (step 500) reduces mean continuation length from 111.58 to 91.97 tokens (17.58\% reduction; median 81 to 52 tokens), with preference accuracy changing from 48.64\% to 48.55\% ($-0.09$ pp delta).
Both comparisons meet the strong positive criterion of at least $2\%$ token reduction with a utility delta of at least $-1.0$ pp.

\section{Conclusion}
This paper presents LOCUS, which freezes the pretrained backbone and selects a low-rank adaptation subspace while preserving the native preference objective. On Anthropic HH-RLHF, we compare LOCUS with protocol-matched full-parameter DPO and DrDPO branches and with the released SamPO checkpoint on Pythia-2.8B, then repeat the comparisons on Qwen2.5-3B. Each comparison matches the objective, checkpoint, data split, and decoding protocol, measuring continuation length and internal chosen-vs-rejected preference accuracy. The protocol-matched DrDPO branch yields 25.29\% and 17.58\% reductions on Pythia-2.8B and Qwen2.5-3B, and continued SamPO adaptation yields 39.84\%, all while updating fewer than 0.3\% of model parameters. Additional development evaluations extend the measured reductions to safety and instruction-following data under the same internal preference diagnostic.

\clearpage

\section*{Limitations}
LOCUS selects a low-rank adaptation subspace for post-training, with the following limitations.
The evaluation covers two decoder-only backbone families at roughly 3B scale, Pythia-2.8B and Qwen2.5-3B, under greedy decoding. Larger models and sampling-based inference remain untested.

The candidate pool $\mathcal{C}$ uses four discrete ranks and coarse projection groupings. Continuous rank allocation, finer layer selection, and gradient-informed pruning remain outside the study.

The cross-task evaluations broaden coverage across dialogue, safety, and instruction following, but do not predict the reduction available on an unseen task.

\bibliography{references}

@inproceedings{rafailov2023direct,
  title = "Direct Preference Optimization: Your Language Model is Secretly a Reward Model",
  author = "Rafailov, Rafael and Sharma, Archit and Mitchell, Eric and Manning, Christopher D. and Ermon, Stefano and Finn, Chelsea",
  booktitle = "Advances in Neural Information Processing Systems",
  volume = "36",
  pages = "53728--53741",
  year = "2023",
  url = "https://proceedings.neurips.cc/paper_files/paper/2023/hash/a85b405ed65c6477a4fe8302b5e06ce7-Abstract-Conference.html"
}

@inproceedings{wu2025drdpo,
  title = "Towards Robust Alignment of Language Models: Distributionally Robustifying Direct Preference Optimization",
  author = "Wu, Junkang and Xie, Yuexiang and Yang, Zhengyi and Wu, Jiancan and Chen, Jiawei and Gao, Jinyang and Ding, Bolin and Wang, Xiang and He, Xiangnan",
  booktitle = "Proceedings of the International Conference on Learning Representations",
  year = "2025",
  url = "https://openreview.net/forum?id=CbfsKHiWEn"
}

@inproceedings{lu-etal-2024-eliminating,
  title = "Eliminating Biased Length Reliance of Direct Preference Optimization via Down-Sampled {KL} Divergence",
  author = "Lu, Junru and Li, Jiazheng and An, Siyu and Zhao, Meng and He, Yulan and Yin, Di and Sun, Xing",
  booktitle = "Proceedings of the 2024 Conference on Empirical Methods in Natural Language Processing",
  month = nov,
  year = "2024",
  address = "Miami, Florida, USA",
  publisher = "Association for Computational Linguistics",
  pages = "1047--1067",
  url = "https://aclanthology.org/2024.emnlp-main.60/"
}

@inproceedings{park-etal-2024-disentangling,
  title = "Disentangling Length from Quality in Direct Preference Optimization",
  author = "Park, Ryan and Rafailov, Rafael and Ermon, Stefano and Finn, Chelsea",
  booktitle = "Findings of the Association for Computational Linguistics: ACL 2024",
  month = aug,
  year = "2024",
  address = "Bangkok, Thailand",
  publisher = "Association for Computational Linguistics",
  pages = "4998--5017",
  url = "https://aclanthology.org/2024.findings-acl.297/"
}

@article{li-etal-2026-gr3,
  title = "Tackling Length Inflation Without Trade-offs: Group Relative Reward Rescaling for Reinforcement Learning",
  author = "Li, Zichao and Lou, Jie and Dong, Fangchen and Fan, Zhiyuan and Ren, Mengjie and Lin, Hongyu and Han, Xianpei and Zhang, Debing and Sun, Le and Lu, Yaojie and Yu, Xing",
  journal = "arXiv preprint arXiv:2603.10535",
  year = "2026",
  url = "https://arxiv.org/abs/2603.10535"
}

@inproceedings{azar2024general,
  title = "A General Theoretical Paradigm to Understand Learning from Human Preferences",
  author = "Azar, Mohammad Gheshlaghi and Guo, Zhaohan Daniel and Piot, Bilal and Munos, Remi and Rowland, Mark and Valko, Michal and Calandriello, Daniele",
  booktitle = "Proceedings of the International Conference on Artificial Intelligence and Statistics",
  volume = "238",
  pages = "4447--4455",
  year = "2024",
  url = "https://proceedings.mlr.press/v238/gheshlaghi-azar24a.html"
}

@inproceedings{ethayarajh2024kto,
  title = "Model Alignment as Prospect Theoretic Optimization",
  author = "Ethayarajh, Kawin and Xu, Winnie and Muennighoff, Niklas and Jurafsky, Dan and Kiela, Douwe",
  booktitle = "Proceedings of the 41st International Conference on Machine Learning",
  pages = "12634--12651",
  year = "2024",
  url = "https://proceedings.mlr.press/v235/ethayarajh24a.html"
}

@inproceedings{ouyang2022training,
  title = "Training language models to follow instructions with human feedback",
  author = "Ouyang, Long and Wu, Jeffrey and Jiang, Xu and Almeida, Diogo and Wainwright, Carroll and Mishkin, Pamela and Zhang, Chong and Agarwal, Sandhini and Slama, Katarina and Ray, Alex and Schulman, John and Hilton, Jacob and Kelton, Fraser and Miller, Luke and Simens, Maddie and Askell, Amanda and Welinder, Peter and Christiano, Paul F. and Leike, Jan and Lowe, Ryan",
  booktitle = "Advances in Neural Information Processing Systems",
  volume = "35",
  pages = "27730--27744",
  year = "2022",
  url = "https://proceedings.neurips.cc/paper_files/paper/2022/hash/b1efde53be364a73914f58805a001731-Abstract-Conference.html"
}

@inproceedings{stiennon2020learning,
  title = "Learning to summarize with human feedback",
  author = "Stiennon, Nisan and Ouyang, Long and Wu, Jeffrey and Ziegler, Daniel and Lowe, Ryan and Voss, Chelsea and Radford, Alec and Amodei, Dario and Christiano, Paul F.",
  booktitle = "Advances in Neural Information Processing Systems",
  volume = "33",
  pages = "3008--3021",
  year = "2020",
  url = "https://proceedings.neurips.cc/paper/2020/hash/1f89885d556929e98d3ef9b86448f951-Abstract.html"
}

@inproceedings{christiano2017deep,
  title = "Deep Reinforcement Learning from Human Preferences",
  author = "Christiano, Paul F. and Leike, Jan and Brown, Tom and Martic, Miljan and Legg, Shane and Amodei, Dario",
  booktitle = "Advances in Neural Information Processing Systems",
  volume = "30",
  pages = "4299--4307",
  year = "2017",
  url = "https://proceedings.neurips.cc/paper/2017/hash/d5e2c0adad503c91f91df240d0cd4e49-Abstract.html"
}

@article{bai2022training,
  title = "Training a Helpful and Harmless Assistant with Reinforcement Learning from Human Feedback",
  author = "Bai, Yuntao and Jones, Andy and Ndousse, Kamal and Askell, Amanda and Chen, Anna and DasSarma, Nova and Drain, Dawn and Fort, Stanislav and Ganguli, Deep and Henighan, Tom and Joseph, Nicholas and Kadavath, Saurav and Kernion, Jackson and Conerly, Tom and El-Showk, Sheer and Elhage, Nelson and Hatfield-Dodds, Zac and Hernandez, Danny and Hume, Tristan and Johnston, Scott and Kravec, Shauna and Lovitt, Liane and Nanda, Neel and Olsson, Catherine and Amodei, Dario and Brown, Tom and Clark, Jack and McCandlish, Sam and Olah, Chris and Mann, Ben and Kaplan, Jared",
  journal = "arXiv preprint arXiv:2204.05862",
  year = "2022",
  url = "https://arxiv.org/abs/2204.05862"
}

@inproceedings{singhal2023long,
  title = "A Long Way to Go: Investigating Length Correlations in {RLHF}",
  author = "Singhal, Prasann and Goyal, Tanya and Xu, Jiacheng and Durrett, Greg",
  booktitle = "First Conference on Language Modeling",
  year = "2024",
  url = "https://openreview.net/forum?id=G8LaO1P0xv"
}

@inproceedings{dubois2024alpacafarm,
  title = "{AlpacaFarm}: A Simulation Framework for Methods that Learn from Human Feedback",
  author = "Dubois, Yann and Li, Xuechen and Taori, Rohan and Zhang, Tianyi and Gulrajani, Ishaan and Ba, Jimmy and Guestrin, Carlos and Liang, Percy and Hashimoto, Tatsunori B.",
  booktitle = "Advances in Neural Information Processing Systems",
  volume = "36",
  pages = "30039--30069",
  year = "2023",
  url = "https://proceedings.neurips.cc/paper_files/paper/2023/hash/5fc47800ee5b30b8777fdd30abcaaf3b-Abstract-Conference.html"
}

@article{saito2023verbosity,
  title = "Verbosity Bias in Preference Labeling by Large Language Models",
  author = "Saito, Keita and Wachi, Akifumi and Wataoka, Koki and Akimoto, Youhei",
  journal = "arXiv preprint arXiv:2310.10076",
  year = "2023",
  url = "https://arxiv.org/abs/2310.10076"
}

@inproceedings{hu2022lora,
  title = "{LoRA}: Low-Rank Adaptation of Large Language Models",
  author = "Hu, Edward J. and Shen, Yelong and Wallis, Phillip and Allen-Zhu, Zeyuan and Li, Yuanzhi and Wang, Shean and Wang, Lu and Chen, Weizhu",
  booktitle = "Proceedings of the International Conference on Learning Representations",
  year = "2022",
  url = "https://openreview.net/forum?id=nZeVKeeFYf9"
}

@inproceedings{dettmers2024qlora,
  title = "{QLoRA}: Efficient Finetuning of Quantized {LLMs}",
  author = "Dettmers, Tim and Pagnoni, Artidoro and Holtzman, Ari and Zettlemoyer, Luke",
  booktitle = "Advances in Neural Information Processing Systems",
  volume = "36",
  pages = "10088--10115",
  year = "2023",
  url = "https://proceedings.neurips.cc/paper_files/paper/2023/hash/1feb87871436031bdc0f2beaa62a049b-Abstract-Conference.html"
}

@inproceedings{li2021prefix,
  title = "Prefix-Tuning: Optimizing Continuous Prompts for Generation",
  author = "Li, Xiang Lisa and Liang, Percy",
  booktitle = "Proceedings of the 59th Annual Meeting of the Association for Computational Linguistics and the 11th International Joint Conference on Natural Language Processing",
  pages = "4582--4597",
  year = "2021",
  url = "https://aclanthology.org/2021.acl-long.353/"
}

@inproceedings{lester2021power,
  title = "The Power of Scale for Parameter-Efficient Prompt Tuning",
  author = "Lester, Brian and Al-Rfou, Rami and Constant, Noah",
  booktitle = "Proceedings of the 2021 Conference on Empirical Methods in Natural Language Processing",
  pages = "3045--3059",
  year = "2021",
  url = "https://aclanthology.org/2021.emnlp-main.243/"
}

@inproceedings{houlsby2019parameter,
  title = "Parameter-Efficient Transfer Learning for {NLP}",
  author = "Houlsby, Neil and Giurgiu, Andrei and Jastrzebski, Stanislaw and Morrone, Bruna and De Laroussilhe, Quentin and Gesmundo, Andrea and Attariyan, Mona and Gelly, Sylvain",
  booktitle = "Proceedings of the 36th International Conference on Machine Learning",
  pages = "2790--2799",
  year = "2019",
  url = "https://proceedings.mlr.press/v97/houlsby19a.html"
}

@inproceedings{liu2024dora,
  title = "{DoRA}: Weight-Decomposed Low-Rank Adaptation",
  author = "Liu, Shih-Yang and Wang, Chien-Yi and Yin, Hongxu and Molchanov, Pavlo and Wang, Yu-Chiang Frank and Cheng, Kwang-Ting and Chen, Min-Hung",
  booktitle = "Proceedings of the 41st International Conference on Machine Learning",
  pages = "32100--32121",
  year = "2024",
  url = "https://proceedings.mlr.press/v235/liu24bn.html"
}

@inproceedings{zhang2023adaptive,
  title = "Adaptive Budget Allocation for Parameter-Efficient Fine-Tuning",
  author = "Zhang, Qingru and Chen, Minshuo and Bukharin, Alexander and He, Pengcheng and Cheng, Yu and Chen, Weizhu and Zhao, Tuo",
  booktitle = "Proceedings of the International Conference on Learning Representations",
  year = "2023",
  url = "https://openreview.net/forum?id=lq62uWRJjiY"
}

@inproceedings{aghajanyan2021intrinsic,
  title = "Intrinsic Dimensionality Explains the Effectiveness of Language Model Fine-Tuning",
  author = "Aghajanyan, Armen and Gupta, Sonal and Zettlemoyer, Luke",
  booktitle = "Proceedings of the 59th Annual Meeting of the Association for Computational Linguistics",
  pages = "7319--7328",
  year = "2021",
  url = "https://aclanthology.org/2021.acl-long.568/"
}

@inproceedings{sharma2023truth,
  title = "The Truth is in There: Improving Reasoning in Language Models with Layer-Selective Rank Reduction",
  author = "Sharma, Pratyusha and Ash, Jordan T. and Misra, Dipendra",
  booktitle = "Proceedings of the International Conference on Learning Representations",
  year = "2024",
  url = "https://openreview.net/forum?id=ozX92bu8VA"
}

@inproceedings{biderman2023pythia,
  title = "Pythia: A Suite for Analyzing Large Language Models Across Training and Scaling",
  author = "Biderman, Stella and Schoelkopf, Hailey and Anthony, Quentin Gregory and Bradley, Herbie and O'Brien, Kyle and Hallahan, Eric and Khan, Mohammad Aflah and Purohit, Shivanshu and Prashanth, USVSN Sai and Raff, Edward and Skowron, Aviya and Sutawika, Lintang and van der Wal, Oskar",
  booktitle = "Proceedings of the 40th International Conference on Machine Learning",
  pages = "2397--2430",
  year = "2023",
  url = "https://proceedings.mlr.press/v202/biderman23a.html"
}

@inproceedings{black2022gpt,
  title = "{GPT-NeoX-20B}: An Open-Source Autoregressive Language Model",
  author = "Black, Sid and Biderman, Stella and Hallahan, Eric and Anthony, Quentin and Gao, Leo and Golding, Laurence and He, Horace and Leahy, Connor and McDonell, Kyle and Phang, Jason and Pieler, Michael and Prashanth, USVSN Sai and Purohit, Shivanshu and Reynolds, Laria and Tow, Jonathan and Wang, Ben and Weinbach, Samuel",
  booktitle = "Proceedings of {B}ig{S}cience Episode {\#}5 -- Workshop on Challenges {\&} Perspectives in Creating Large Language Models",
  pages = "95--136",
  year = "2022",
  url = "https://aclanthology.org/2022.bigscience-1.9/"
}

@article{yang2024qwen25,
  title = "{Qwen2.5} Technical Report",
  author = "Yang, An and Yang, Baosong and Zhang, Beichen and Hui, Binyuan and Zheng, Bo and Yu, Bowen and Li, Chengyuan and Liu, Dayiheng and Huang, Fei and others",
  journal = "arXiv preprint arXiv:2412.15115",
  year = "2024",
  url = "https://arxiv.org/abs/2412.15115"
}

@article{shazeer2020glu,
  title = "{GLU} Variants Improve Transformer",
  author = "Shazeer, Noam",
  journal = "arXiv preprint arXiv:2002.05202",
  year = "2020",
  url = "https://arxiv.org/abs/2002.05202"
}

@inproceedings{zhang2019rmsnorm,
  title = "Root Mean Square Layer Normalization",
  author = "Zhang, Biao and Sennrich, Rico",
  booktitle = "Advances in Neural Information Processing Systems",
  volume = "32",
  pages = "12360--12371",
  year = "2019",
  url = "https://proceedings.neurips.cc/paper/2019/hash/1e8a19426224ca89e83cef47f1e7f53b-Abstract.html"
}

@article{mukherjee2023orca,
  title = "Orca: Progressive Learning from Complex Explanation Traces of {GPT-4}",
  author = "Mukherjee, Subhabrata and Mitra, Arindam and Jawahar, Ganesh and Agarwal, Sahaj and Palangi, Hamid and Awadallah, Ahmed",
  journal = "arXiv preprint arXiv:2306.02707",
  year = "2023",
  url = "https://arxiv.org/abs/2306.02707"
}

@inproceedings{leviathan2023fast,
  title = "Fast Inference from Transformers via Speculative Decoding",
  author = "Leviathan, Yaniv and Kalman, Matan and Matias, Yossi",
  booktitle = "Proceedings of the 40th International Conference on Machine Learning",
  pages = "19274--19286",
  year = "2023",
  url = "https://proceedings.mlr.press/v202/leviathan23a.html"
}

@inproceedings{pope2023efficiently,
  title = "Efficiently Scaling Transformer Inference",
  author = "Pope, Reiner and Douglas, Sholto and Chowdhery, Aakanksha and Devlin, Jacob and Bradbury, James and Heek, Jonathan and Xiao, Kefan and Agrawal, Shivani and Dean, Jeff",
  booktitle = "Proceedings of Machine Learning and Systems",
  volume = "5",
  pages = "606--624",
  year = "2023",
  url = "https://proceedings.mlsys.org/paper_files/paper/2023/hash/c4be71ab8d24cdfb45e3d06dbfca2780-Abstract-mlsys2023.html"
}

@inproceedings{dao2022flashattention,
  title = "{FlashAttention}: Fast and Memory-Efficient Exact Attention with {IO}-Awareness",
  author = "Dao, Tri and Fu, Daniel Y. and Ermon, Stefano and Rudra, Atri and R{\'e}, Christopher",
  booktitle = "Advances in Neural Information Processing Systems",
  volume = "35",
  pages = "16344--16359",
  year = "2022",
  url = "https://proceedings.neurips.cc/paper_files/paper/2022/hash/67d57c32e20fd0a7a302cb81d36e40d5-Abstract-Conference.html"
}

@inproceedings{kwon2023efficient,
  title = "Efficient Memory Management for Large Language Model Serving with {PagedAttention}",
  author = "Kwon, Woosuk and Li, Zhuohan and Zhuang, Siyuan and Sheng, Ying and Zheng, Lianmin and Yu, Cody Hao and Gonzalez, Joseph E. and Zhang, Hao and Stoica, Ion",
  booktitle = "Proceedings of the 29th Symposium on Operating Systems Principles",
  pages = "611--626",
  year = "2023",
  url = "https://dl.acm.org/doi/10.1145/3600006.3613165"
}

@article{zhou2023instruction,
  title = "Instruction-Following Evaluation for Large Language Models",
  author = "Zhou, Jeffrey and Lu, Tianjian and Mishra, Swaroop and Brahma, Siddhartha and Basu, Sujoy and Luan, Yi and Zhou, Denny and Hou, Le",
  journal = "arXiv preprint arXiv:2311.07911",
  year = "2023",
  url = "https://arxiv.org/abs/2311.07911"
}

\clearpage
\appendix
\setlength{\intextsep}{4pt plus 1pt minus 1pt}
\setlength{\textfloatsep}{4pt plus 1pt minus 1pt}

\section{Formal Proofs of Propositions 1 and 2}\label{sec:appendix_proof}
This appendix gives the formal proofs of Propositions~\ref{prop:adapter_count} and~\ref{prop:merge_equivalence}, together with remarks clarifying their scope.

\begin{proof}[Proof of Proposition~\ref{prop:adapter_count}]
Let $\mathcal{W}=\mathbb{R}^{d_1\times d_2}$ denote the space of matrices that can be updated by a targeted module.
Under full-parameter fine-tuning, write the update as $\Delta W\in\mathcal{W}$.
For the standard matrix basis $\{E_{ij}:1\leq i\leq d_1,\,1\leq j\leq d_2\}$, every update has the unique expansion
\[
\Delta W=\sum_{i=1}^{d_1}\sum_{j=1}^{d_2} \delta_{ij}E_{ij}.
\]
The coefficients $\delta_{ij}$ are independent trainable scalars. Therefore, the full-parameter representation exposes exactly $d_1d_2$ trainable entries.

For LoRA, define the factor parameter space
\[
\mathcal{P}=\mathbb{R}^{d_1\times r}\times\mathbb{R}^{r\times d_2}
\]
and the update map $\phi:\mathcal{P}\to\mathcal{W}$ by
\[
\phi(B,A)=\frac{\alpha}{r}BA.
\]
The first factor contains $d_1r$ scalar entries and the second contains $rd_2$ scalar entries.
Because $\alpha/r\ne 0$ is fixed, it introduces no trainable scalar and does not change the number of entries exposed to the optimizer. Therefore,
\[
\dim_{\mathrm{entries}}(\mathcal{P})=d_1r+rd_2=r(d_1+d_2).
\]
Moreover, for every $(B,A)\in\mathcal{P}$ and indices $i,j$,
\[
\phi(B,A)_{ij}=\frac{\alpha}{r}\sum_{k=1}^{r}B_{ik}A_{kj},
\]
so $\operatorname{rank}(\phi(B,A))\leq r$.
This establishes both the claimed factor-entry count and the rank constraint on the represented update.

The factorization map need not be injective: for every invertible $G\in\mathbb{R}^{r\times r}$,
\[
\phi(BG,G^{-1}A)=\frac{\alpha}{r}BGG^{-1}A=\phi(B,A).
\]
The identity shows that distinct factor pairs can represent the same matrix. At full-rank factors, this change-of-basis redundancy has $r^2$ dimensions. Accounting for that redundancy gives a local dimension of $r(d_1+d_2-r)$ for the represented rank-$r$ manifold.
The optimizer stores gradients and (when applicable) state tensors for the individual entries of $B$ and $A$. Consequently, the implementation-level trainable-parameter count remains $r(d_1+d_2)$.

Comparing this count with the $d_1d_2$ independent entries of full fine-tuning gives a strict reduction if and only if
\[
r(d_1+d_2)<d_1d_2.
\]
This completes the proof.
\end{proof}

\begin{remark}[Interpretation and scope]
The proposition is a parameter-count identity; it does not claim that LoRA projects updates onto semantic or verbosity-specific singular directions.
The usual initialization with $B=0$ makes the initial effective update zero, while later update directions are determined by optimization through the factorized parameterization.
Consequently, parameter reduction alone does not imply a particular change in generation length, preference accuracy, total memory, or training speed; those quantities are measured empirically in Section~\ref{sec:evaluation}.
\end{remark}

\begin{proof}[Proof of Proposition~\ref{prop:merge_equivalence}]
Let the common bias term, if present, be $b$.
The unmerged inference path computes
\[
z_{\mathrm{unmerged}} = W_0h + \frac{\alpha}{r}B(Ah) + b.
\]
By associativity and distributivity of matrix multiplication,
\[
\begin{aligned}
z_{\mathrm{unmerged}}
&= \left(W_0 + \frac{\alpha}{r}BA\right)h+b \\
&= W^*h+b
 = z_{\mathrm{merged}}.
\end{aligned}
\]
The equality holds for every admissible input vector $h$ and does not depend on the values or rank of $B$ and $A$, provided $r\ne0$ and the scalar $\alpha$ is fixed.
If the module is embedded in a deterministic downstream network whose parameters and subsequent operations are unchanged, equal module outputs induce equal downstream activations and logits by composition.
The condition that adapter dropout is disabled is required because a stochastic dropout mask inserted into the unmerged branch is not represented by the fixed merged matrix.
\end{proof}

\begin{remark}[Deployment scope]
The result establishes exact algebraic equivalence for a single merged linear module in inference mode.
It does not establish an end-to-end systems benefit: an unmerged multi-tenant deployment still incurs adapter-specific computation and bookkeeping, while numerical effects from quantization or finite-precision operation order may produce small implementation-level differences.
\end{remark}

\section{Experimental Reference Tables}\label{sec:appendix_reference_tables}
Table~\ref{tab:backbones} describes the two backbone architectures and their attention-adapter parameterizations.
Pythia uses fused query-key-value projections, whereas Qwen uses separate query, key, and value projections with grouped-query attention.
Consequently, the same attention-adaptation scope exposes different parameter counts: 7.86M for Pythia and 7.37M for Qwen.
The table relates these counts to the module layouts used in the cross-backbone comparison.

Table~\ref{tab:baselines} defines the starting checkpoint and update parameterization of each comparison branch.
For controlled DPO and DrDPO, both branches start from a shared full-parameter SFT checkpoint and retain the same native preference objective.
The full-parameter branch updates the backbone, whereas LOCUS updates its low-rank adapter.
The SamPO row identifies a separate continued-adaptation comparison against the official released checkpoint.

Table~\ref{tab:datasets} distinguishes the training pools, development splits, and final evaluation sets.
The controlled HH comparisons use development data for selection, with separate confirmation where specified, before evaluation on the 8,552 frozen test pairs.
The SamPO comparison uses its established 256-example evaluation split.
The additional Anthropic Harmless and Orca DPO rows identify the safety-refusal and instruction-following datasets. Their evaluation scope is limited to the corresponding 256-example development splits; no external frozen-test claim is made for these rows.

\begin{table*}[t]
\centering
\small
\setlength{\tabcolsep}{4pt}
\begin{tabular}{@{}p{0.15\textwidth}p{0.29\textwidth}p{0.32\textwidth}p{0.16\textwidth}@{}}
\toprule
\textbf{Backbone} & \textbf{Architecture} & \textbf{LOCUS targets} & \textbf{Trainable parameters} \\
\midrule
Pythia-2.8B & GPT-NeoX; 32 layers; hidden size 2560; parallel attention/MLP blocks. & Fused QKV and attention output modules. & 7.86M (0.2826\%). \\
Qwen2.5-3B & Modern Llama-style architecture; 36 layers; hidden size 2048; grouped-query attention; SwiGLU; RMSNorm. & Separate \texttt{q\_proj}, \texttt{k\_proj}, \texttt{v\_proj}, and \texttt{o\_proj}. & 7.37M (0.2383\%). \\
\bottomrule
\end{tabular}
\caption{Backbones and backbone-specific LOCUS parameterizations.}
\label{tab:backbones}

\vspace{8pt}
\begin{tabular}{@{}p{0.16\textwidth}p{0.23\textwidth}p{0.27\textwidth}p{0.25\textwidth}@{}}
\toprule
\textbf{Comparison} & \textbf{Starting point} & \textbf{Full-parameter branch} & \textbf{Low-rank LOCUS branch} \\
\midrule
Pythia DPO & Shared full-parameter SFT. & Full-parameter DPO with native objective. & Low-rank DPO with same objective and splits. \\
Pythia DrDPO & Shared full-parameter SFT. & Full-parameter DrDPO with native objective. & Low-rank DrDPO with same objective and splits. \\
Pythia SamPO & Official released SamPO checkpoint. & Not a same-SFT comparison. & Continued low-rank adaptation under native SamPO. \\
Qwen DPO/DrDPO & Shared full-parameter SFT. & Full-parameter DPO/DrDPO. & Low-rank LOCUS; selection and confirmation precede frozen test evaluation. \\
\bottomrule
\end{tabular}
\caption{Full-parameter and low-rank branches used in the primary comparisons.}
\label{tab:baselines}

\vspace{8pt}
\begin{tabular}{@{}p{0.18\textwidth}p{0.22\textwidth}p{0.28\textwidth}p{0.22\textwidth}@{}}
\toprule
\textbf{Dataset / protocol} & \textbf{Training pool} & \textbf{Selection / confirmation} & \textbf{Frozen evaluation} \\
\midrule
HH / Pythia DPO & 160,800-pair pool. & One 256-example development split. & 8,552 test pairs. \\
HH / Pythia DrDPO & 160,800-pair pool. & Disjoint 256-example selection and confirmation splits. & 8,552 test pairs. \\
HH / SamPO & Released checkpoint; no retraining baseline. & Established 256-example evaluation split. & 256 examples. \\
HH / Qwen validation & 160,288 pairs after two 256-example development splits. & 256-example selection and 256-example confirmation. & 8,552 test pairs. \\
Harmless dataset & 42,130 HH-RLHF \texttt{harmless-base} training pairs; safety refusal. & One 256-example development split. & Development evaluation only. \\
Orca DPO dataset & 12,112 context-filtered \texttt{Intel/orca\_dpo\_pairs} training pairs; instruction following. & One 256-example development split. & Development evaluation only. \\
\bottomrule
\end{tabular}
\caption{Datasets, development splits, and frozen evaluation sets.}
\label{tab:datasets}
\end{table*}

\section{Additional Experimental Results}\label{sec:appendix_results}
This appendix records the primary HH comparisons and the cross-task development evaluations reported in Section~\ref{sec:evaluation}.
Table~\ref{tab:main_results} consolidates the primary Anthropic HH comparisons across both backbone families and the three preference objectives.
For DPO and DrDPO, both branches start from the same SFT checkpoint; for SamPO, LOCUS adapts the released SamPO checkpoint. The table reports the measured token and internal preference-diagnostic changes without introducing an additional evaluation protocol.

\begin{table*}[t]
\centering
\resizebox{\textwidth}{!}{%
\small
\setlength{\tabcolsep}{8pt}
\begin{tabular}{@{}lcccc@{}}
\toprule
\textbf{Protocol and backbone} & \textbf{Tokens} & \textbf{Reduction} & \textbf{Preference accuracy} & \textbf{Accuracy change; trainable parameters} \\
\midrule
SamPO / Anthropic Helpful and Harmless (Pythia-2.8B) & 132.77$\to$\textbf{79.88} & \textbf{+39.84\%} & 53.52$\to$53.52\% & \hphantom{-}0.00 pp; 7.86M/0.28\% \\
DPO / Anthropic Helpful and Harmless (Pythia-2.8B) & 137.67$\to$\textbf{109.12} & \textbf{+20.73\%} & 48.40$\to$48.39\% & -0.01 pp; 7.86M/0.28\% \\
DrDPO / Anthropic Helpful and Harmless (Pythia-2.8B) & 145.61$\to$\textbf{108.79} & \textbf{+25.29\%} & 48.39$\to$48.26\% & -0.13 pp; 7.86M/0.28\% \\
\midrule
DPO / Anthropic Helpful and Harmless (Qwen2.5-3B) & 108.26$\to$\textbf{92.16} & \textbf{+14.87\%} & 48.69$\to$48.64\% & -0.05 pp; 7.37M/0.24\% \\
DrDPO / Anthropic Helpful and Harmless (Qwen2.5-3B) & 111.58$\to$\textbf{91.97} & \textbf{+17.58\%} & 48.64$\to$48.55\% & -0.09 pp; 7.37M/0.24\% \\
\bottomrule
\end{tabular}%
}
\caption{Primary results across both backbone families.}
\label{tab:main_results}
\end{table*}

Table~\ref{tab:crosstask} gives the exact values behind the cross-task comparison. The Anthropic Helpful and Harmless row reproduces the 8,552-pair frozen-test result from Table~\ref{tab:main_results}; the Harmless and Orca DPO rows use 256-example development evaluations. In every row, the preference values are internal chosen-vs-rejected diagnostics rather than external quality measurements.

\begin{table*}[t]
\centering
\small
\setlength{\tabcolsep}{4pt}
\begin{tabular}{@{}p{0.27\textwidth}p{0.08\textwidth}rrrrrr@{}}
\toprule
\textbf{Dataset} & \textbf{Split} & \shortstack{\textbf{Base}\\\textbf{tokens}} & \shortstack{\textbf{LOCUS}\\\textbf{tokens}} & \textbf{Reduction} & \shortstack{\textbf{Base}\\\textbf{accuracy}} & \shortstack{\textbf{LOCUS}\\\textbf{accuracy}} & \shortstack{\textbf{Accuracy}\\\textbf{change}} \\
\midrule
Anthropic Helpful and Harmless & 8,552 test & 137.67 & 109.12 & 20.73\% & 48.40\% & 48.39\% & -0.01 pp \\
Anthropic Harmless & 256 dev. & 137.53 & 102.75 & 25.29\% & 57.03\% & 58.20\% & +1.17 pp \\
Orca DPO & 256 dev. & 175.95 & 35.25 & 79.97\% & 69.14\% & 82.81\% & +13.67 pp \\
\bottomrule
\end{tabular}
\caption{Cross-task token and accuracy comparisons.}
\label{tab:crosstask}
\end{table*}

\end{document}